\newtheorem{theorem}{Theorem}
\newenvironment{proof}{{\indent \indent \it Proof:}}{\hfill $\square$ \vspace{0.5em}}
\newtheorem{lemma}{Lemma}
\newcommand{\e}{\mathrm{e}}
\title{Prespecified-Performance Kinematic Tracking Control for Aerial Manipulation}
\author{Huazi~Cao, Jiahao~Shen, Zhengzhen~Li, Qinquan~Ren, Shiyu~Zhao
\thanks{Manuscript received: May 3, 2025; Accepted August 26, 2025. This paper was recommended for publication by Editor Cosimo Della Santina upon evaluation of the Associate Editor and
Reviewers’ comments. This work was supported by the National Major Research \& Development Plan - Intelligent Robotics Major Special Project (Grant No. 2023YFB4705500), the National Natural Science Foundation of China (Grant No. 62473320),  and the Open Foundation of the National Key Laboratory of Autonomous Intelligent Unmanned Systems (Grant No. 2024-SRIAS-KF-007). (Corresponding author: Shiyu Zhao)}
\thanks{Huazi Cao is with the Westlake Institute for Optoelectronics and the WINDY Lab in the Department of Artificial Intelligence, Westlake University, Hangzhou 311421, China (E-mail: caohuazi@wioe.westlake.edu.cn)}
\thanks{Jiahao Shen, Zhengzhen Li, and Shiyu Zhao are with the WINDY Lab in the Department of Artificial Intelligence, Westlake University, Hangzhou 310024, China (E-mail: \{shenjiahao, lizhengzhen, zhaoshiyu\}@westlake.edu.cn).}
\thanks{Qinyuan Ren is with the College of Control Science and Engineering, Zhejiang University, Hangzhou 310027, China (E-mail: renqinyuan@zju.edu.cn). }
}
\begin{document}
	
\maketitle
\begin{abstract}
        This paper studies the kinematic tracking control problem for aerial manipulators. Existing kinematic tracking control methods, which typically employ proportional-derivative feedback or tracking-error-based feedback strategies, may fail to achieve tracking objectives within specified time constraints. To address this limitation, we propose a novel control framework comprising two key components: end-effector tracking control based on a user-defined preset trajectory and quadratic programming-based reference allocation. Compared with state-of-the-art approaches, the proposed method has several attractive features. First, it ensures that the end-effector reaches the desired position within a preset time while keeping the tracking error within a performance envelope that reflects task requirements. Second, quadratic programming is employed to allocate the references of the quadcopter base and the Delta arm, while considering the physical constraints of the aerial manipulator, thus preventing solutions that may violate physical limitations. The proposed approach is validated through three experiments. Experimental results demonstrate the effectiveness of the proposed algorithm and its capability to guarantee that the target position is reached within the preset time. 
\end{abstract}

\begin{IEEEkeywords}
    Aerial manipulator, Prespecified performance, Kinematic tracking control, Aerial grasping, Peg-in-hole
\end{IEEEkeywords}

\section{Introduction}

An aerial manipulator, as a composite robotic system that integrates a multirotor base with a manipulator, not only inherits the three-dimensional maneuverability of the multicopter but also extends its physical interaction capabilities through the end-effector. This unique combination of autonomous flight and precise manipulation has attracted increasing research attention. In recent years, numerous groundbreaking advances have emerged (see \cite{Ollero2021,Xilun2019A,ruggiero2018aerial} for recent surveys), with typical applications covering aerial grasping \cite{luo2023time,cao2024motion}, contact-based inspection \cite{bodie2020active,guo2024aerial}, peg-in-hole operations \cite{lee2023virtual,wang2023millimeter} (Fig~\ref{fig_pig_in_hole}), etc. These applications have progressively validated the practical value of the aerial manipulator.

Developing high-accuracy control methods is essential for aerial manipulators. Current control methods for aerial manipulators primarily focus on low-level motion control and kinematic control. Low-level motion control methods leverage the aerial manipulator's dynamics to design the control law that regulates rotor speeds and manipulator joint torques. Existing low-level motion control methods can be categorized into decoupled control \cite{fumagalli2014developing,cao2023eso} and coupled control \cite{yang2014dynamics,welde2021dynamically}. However, low-level motion control alone is not sufficient for millimeter-level precision owing to model inaccuracies and disturbance susceptibility \cite{wang2023millimeter}. This motivates the exploration of alternative approaches, such as kinematic control, for aerial manipulators.

\begin{figure}[t]
	\centering
	\includegraphics[width=1.0\linewidth]{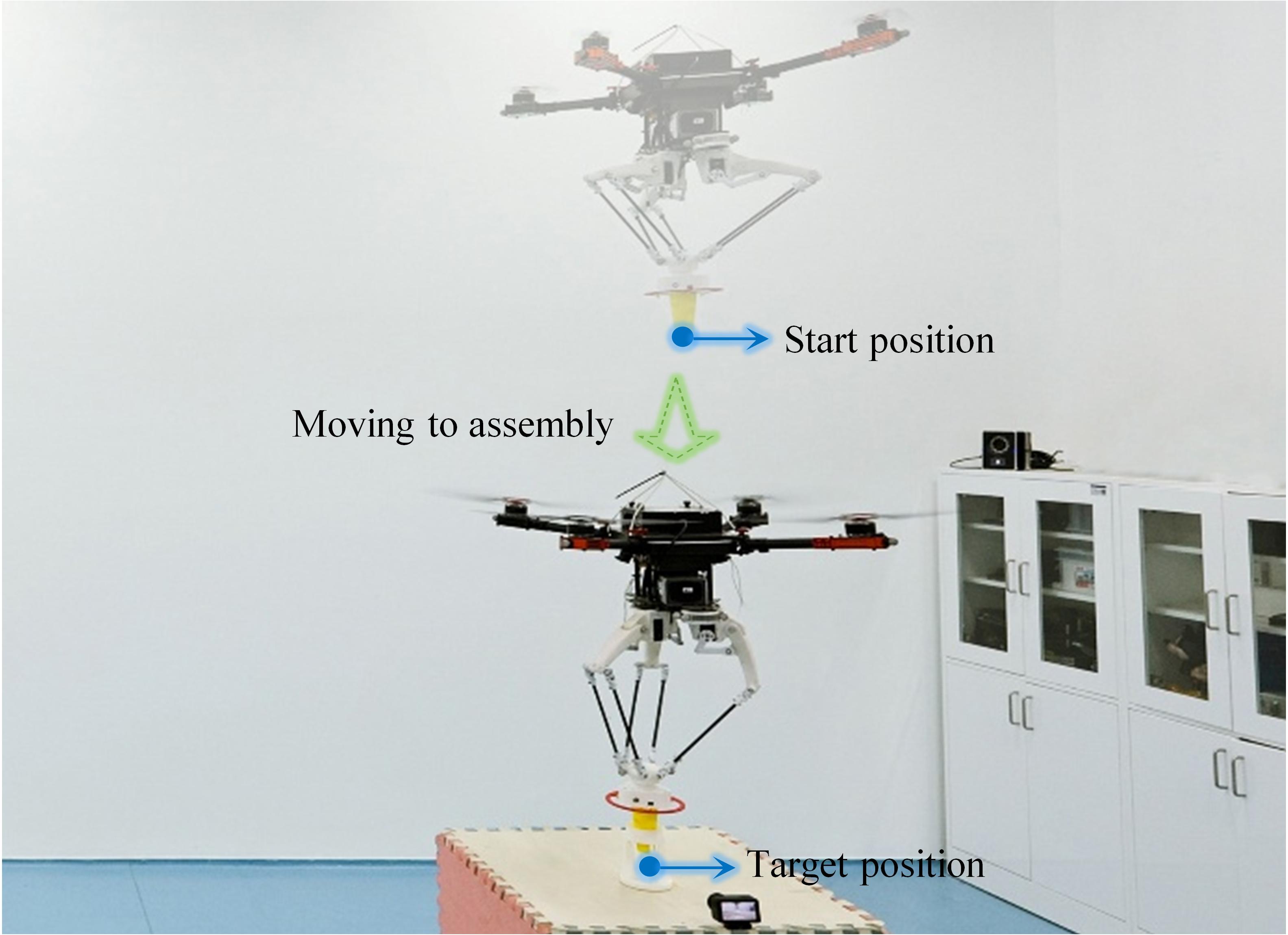}
	\caption{Aerial peg-in-hole assembly by an aerial manipulator. Details are available in the supplementary video.}
	\label{fig_pig_in_hole}
\end{figure}

Kinematic control is derived from the kinematics of aerial manipulators, focusing on achieving high-precision end-effector control through the coordination of motions of the multirotor base and the manipulator. Compared to ground-based manipulators, the kinematic control of aerial manipulators is more challenging for two reasons. First, the manipulator and the multirotor base are dynamically coupled, indicating that their motions have a mutual influence on each other. Second, aerial manipulators operate within a 3D environment rather than in a 2D environment. Moreover, most tasks necessitate 3D coordinated motion between the multirotor base and the manipulator for successful execution. Existing studies on kinematic control methods for aerial manipulators primarily focus on two aspects.

The first aspect is the control architecture. Current control architectures can be classified into two types: decoupled architecture and coupled architecture.  The decoupled architecture separately designs kinematic control for the multirotor base and the manipulator \cite{wang2023millimeter,chermprayong2019integrated}. It achieves end-effector control through a sequential ``platform positioning first, manipulator operation second'' strategy \cite{cao2024motion}. Although simple to implement, this approach suffers from low energy efficiency and slow response speed \cite{Ollero2021}. In contrast, coupled architecture treats the multirotor base and the manipulator as an integrated system \cite{cao2023eso}, coordinating their movements through Jacobian matrix-based methods. Although this approach demonstrates superior energy efficiency and enhanced response characteristics, it introduces significant control complexity due to the necessity of simultaneously managing the degrees of freedom of both subsystems.

The second aspect is the feedback strategy. Since higher-order motion information of both targets and end-effectors is generally difficult to obtain, positional feedback is typically employed to implement kinematic control of aerial manipulators \cite{sanchez2015integral}. There are two existing feedback strategies. The first is the closed-loop inverse kinematics (CLIK) strategy, which utilizes a control law analogous to Proportional-Derivative control to achieve end-effector tracking \cite{sciavicco2010robotics,chiacchio1991closed,muscio2017coordinated}. The second strategy directly employs end-effector tracking error, typically implemented in the form of model predictive control (MPC) \cite{lunni2017nonlinear,lee2020aerial}. By incorporating the end-effector tracking error into the MPC cost function, this strategy generates references for both the multirotor base and the manipulator that reduce the tracking error \cite{luo2023time}. Although both feedback strategies can accomplish end-effector tracking, neither guarantees that the end-effector reaches the target point within the specified time.

The objective of this paper is to achieve kinematic control that enables the end-effector to complete tracking tasks within specified time constraints. The above analysis reveals the limitations in the two aspects of the existing kinematic control approaches for aerial manipulation. This paper aims to overcome these limitations by proposing a new framework that incorporates preset trajectory tracking control and quadratic programming (QP)-based trajectory allocation (see Fig.~\ref{fig_structureController}(a)). The proposed algorithms are verified by carefully designed experiments on an aerial manipulation platform. The novelty of the proposed algorithms is summarized below.

1) We propose a novel two-layer architecture to achieve the tracking control of the end-effector with prespecified performance. The architecture enables the aerial manipulator's end-effector to reach the designated position within a preset time. Moreover, by incorporating the aerial manipulator's physical constraints into the reference trajectory allocation, it prevents the aerial manipulator from oscillating at constraint boundaries. Compared with existing methods based on CLIK \cite{cataldi2019set} and tracking-error feedback strategies \cite{lunni2017nonlinear}, the proposed method guarantees task completion time and ensures better compliance with planned execution outcomes. 

2) We propose a method for generating performance envelope boundaries according to the task completion metrics, including the initial tracking error, steady-state tracking error requirements, and convergence time. This approach enables researchers to flexibly design the parameters of performance envelope boundaries according to actual task requirements.

3) We employ a preset tracking error trajectory \cite{shi2024apreset} to constrain the end-effector's tracking error. Compared with existing prescribed performance control methods based on barrier functions \cite{chen2024adaptive}, our approach effectively avoids singularity issues. In conventional methods, barrier functions are prone to generating singular phenomena under boundary conditions, which not only leads to divergence of the control law but may also cause system safety hazards \cite{shi2024preset}. In contrast, the method in this work fundamentally eliminates the possibility of singularity occurrence, thereby significantly enhancing system safety and reliability.
	
\section{Modeling}
This section shows coordinate frames, notations, and kinematics of the aerial manipulator. It develops a kinematic model for end-effector tracking of the aerial manipulator.

\begin{figure*}[t]
	\centering
	\includegraphics[width=1.0\linewidth]{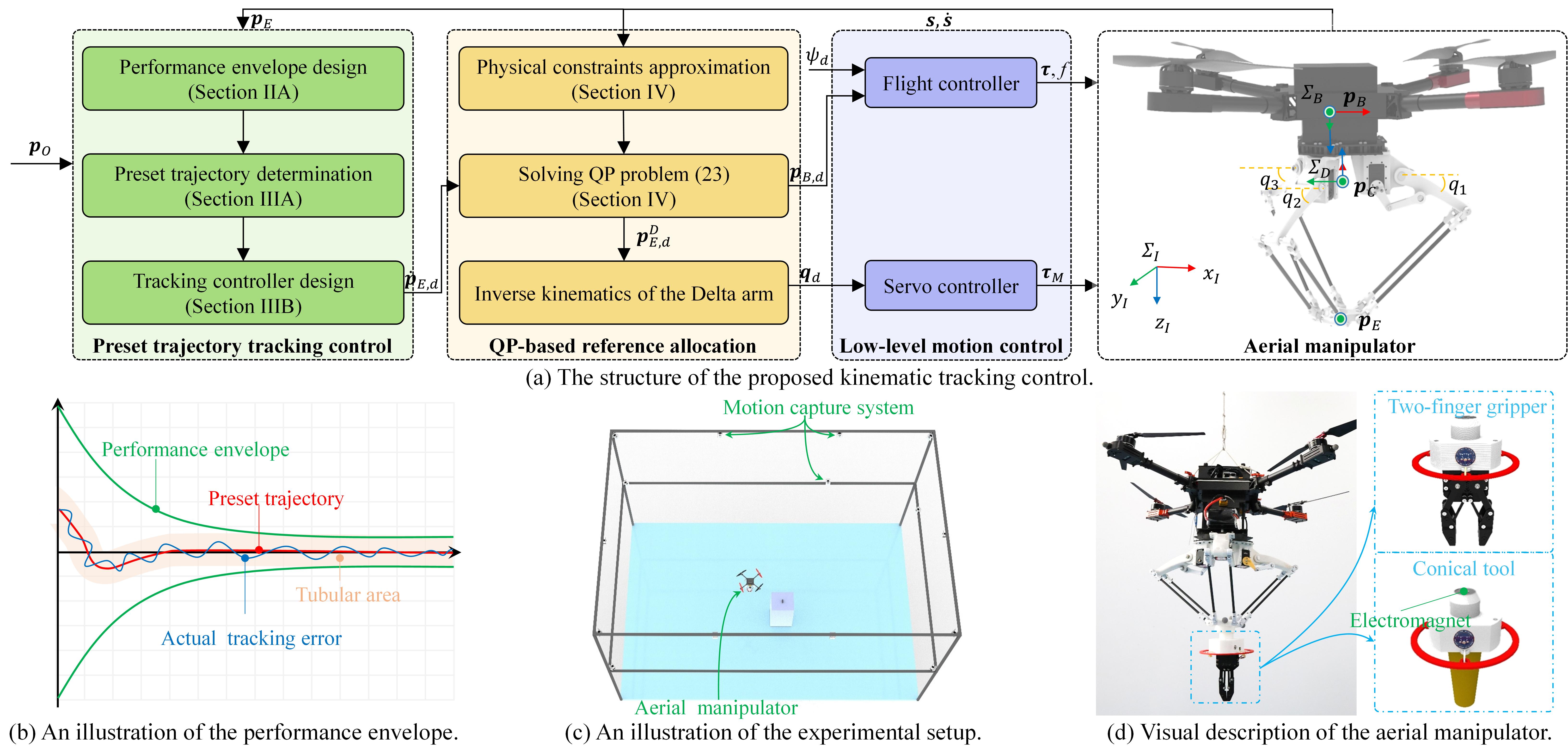}
	\caption{ The system overview, the preset envelope, and experimental setup of the aerial manipulator system.}
	\label{fig_structureController}
\end{figure*}

\subsection{Coordinate Frames and Notations} \label{sec_problem_set}

The aerial manipulator has three reference frames: the inertial frame $\varSigma_I$, the quadcopter body-fixed frame $\varSigma_B$, and the Delta arm frame $\varSigma_D$ (see Fig.~\ref{fig_structureController}(a)). $\varSigma_I$ is an inertial frame where the $z$-axis is in the direction of the gravity vector. $\varSigma_B$ is rigidly attached to the quadcopter base. Its origin coincides with the center of gravity of the quadcopter.  $\varSigma_D$ is rigidly attached to the Delta arm base at its geometric center $\bm{p}_C\in\mathbb{R}^3$. 

Denote the position of the end-effector in $\varSigma_I$ as $\bm{p}_E\in\mathbb{R}^3$. Let $\bm{p}_{O}\in\mathbb{R}^3$ denote the object position in $\varSigma_I$. Then, the tracking error of the end-effector is $\bm{e}_E = \bm{p}_E - \bm{p}_{O}=[e_{E,x},e_{E,y} ,e_{E,z}]^T \in \mathbb{R}^3$. To clearly represent the Cartesian components of vectors, we define $\nu\in \{x, y, z\}$ as the component index. For instance, the components of the tracking error vector $\bm{e}_{e}$ can be expressed as $e_{E,\nu}$, where $\nu\in \{x, y, z\}$.

\subsection{Kinematics of the Aerial Manipulator}
The aerial manipulator analyzed in this paper comprises a quadcopter base and a Delta arm (see Fig.~\ref{fig_structureController}(a)). The Delta arm's end-effector position can translate relative to the base, while its orientation remains aligned with the base orientation. Denote the position of the quadcopter base in $\varSigma_I$ as $\bm{p}_B\in\mathbb{R}^3$. Let $\bm{R}_B\in SO(3)$ denote the rotation matrix from $\varSigma_B$ to $\varSigma_I$. The kinematic expression for the aerial manipulator can then be written as $\bm{p}_E=\bm{p}_B+\bm{R}_B\bm{p}_E^B,$ where, $\bm{p}_E^B \in\mathbb{R}^3$ represents the position of the end-effector in $\varSigma_B$, which can be obtained through the forward kinematics of the manipulator. 

Let $\bm{p}_C^B \in \mathbb{R}^3$ denote the position of the center of the base in $\varSigma_B$. Let $\bm{p}_E^D\in\mathbb{R}^3$ denote the position of the end-effector in $\varSigma_D$. The relationship between $\bm{p}_E^B$ and $\bm{p}_E^D$ is $\bm{p}_E^B = \bm{R}_D^B\bm{p}_E^D+\bm{p}_C^B,$ where $\bm{R}_D^B\in SO(3)$ is the rotation matrix from $\varSigma_D$ to $\varSigma_B$. $\bm{p}_E^D$ is a function of the joint angle vector $\bm{q} = [q_1, q_2, q_3]^T$, and the function can be found in \cite{lopez2006delta}.

As can be seen from Fig.~\ref{fig_structureController}(a), the joint angles of the Delta arm are driven by planar four-bar linkages. The relationship between the joint angles and the crank position angles can be calculated by the kinematics of the planar four-bar linkage \cite[Section~3.6]{shigley2004standard}.

\section{Preset Trajectory Tracking Control} \label{sec_pre_control}
This section proposes a preset trajectory kinematic tracking control method. It ensures that the end-effector reaches the desired position within a preset time while keeping the tracking error within a performance envelope.

\subsection{Performance Envelope}

 The performance envelope is introduced to represent the performance criterion of the aerial manipulator. The boundary function of the performance envelope is denoted as $\bm{\rho}(t)\in\mathbb{R}^3$. According to \cite{shi2024preset}, we employ a boundary function with an exponential form to ensure that the absolute value of the tracking error gradually decreases. The boundary function is designed to $\rho_{\nu}(t)=(\rho_{\nu,0}-\rho_{\nu,\infty})\exp(-l_E t)+\rho_{\nu,\infty} \in\mathbb{R}$, where $\nu\in \{x, y, z\}$. Here, $\bm{\rho}_{0} = [\rho_{x,0},\rho_{y,0},\rho_{z,0}]^T \in \mathbb{R}^3$, $\bm{\rho}_{\infty} = [\rho_{x,\infty},\rho_{y,\infty},\rho_{z,\infty}]^T \in \mathbb{R}^3$, and $l_E$ are positive parameters of the boundary function. 
 
The parameters of the boundary function are determined by analyzing task completion metrics, including the initial tracking error, steady-state tracking error requirements, and convergence time. Specifically, $\bm{\rho}_{0}$ relates to the initial tracking error, requiring that $\rho_{\nu,0}>|{e}_{E, \nu}(0)|$, where $\nu\in \{x, y, z\}$, the definition of ${e}_{E, \nu}$ can be seen in Section~\ref{sec_problem_set}. The parameter $\bm{\rho}_{\infty}$ represents the maximum allowable steady-state tracking error. Its lower bound is typically determined by the end-effector's achievable steady-state tracking accuracy, while the upper bound is determined by the operational precision requirements of the actual tasks. The parameter $l_E$ is the decay rate of the performance function. It can be determined in two steps. The first is to determine a tolerance threshold for the preset trajectory. The exponential decay process of $\bm{\rho}(t)$ exhibits a monotonic decrease toward its asymptotic steady-state value $\bm{\rho}_{\infty}$. The preset time $t_p\in\mathbb{R}$ is defined as the time required for the tracking error to converge to within $\pm\bm{\rho}_{\infty}$. It is constrained by the maximum speed of the aerial manipulator's end-effector. We let $\epsilon_p \in \mathbb{R}$ as the tolerance threshold, and its definition is
 \begin{equation}
     \epsilon_p = (\|\bm{\rho}_{0}\| - \|\bm{\rho}_{\infty}\|)e^{-l_E t_p}.
     \label{eq_epsilon}
 \end{equation}
In practical implementations, $\epsilon_p$ can be determined by a scalar multiple of $\|\bm{\rho}_{\infty}\|$, such as $0.1\|\bm{\rho}_{\infty}\|$, to align with task completion metrics. The second is to calculate the parameter $l_E$ with the value of $\epsilon_p$. According to \eqref{eq_epsilon}, we have $l_E = {\ln\left[(\|\bm{\rho}_{0}\| - \|\bm{\rho}_{\infty}\|]/{\epsilon_p}\right)}/{t_p}$.

\subsection{Preset Tracking Error Trajectory}
The preset tracking error trajectory of the end-effector is introduced to constrain the end-effector’s tracking error. Through the tracking controller in Section~\ref{sec_tracking_controller}, the actual tracking error is constrained to track the preset trajectory, guaranteeing its strict containment within the performance envelope (see Fig.~\ref{fig_structureController}(b)). Let $\bm{\alpha}(t)=[\alpha_x(t),\alpha_y(t),\alpha_z(t)]^T\in\mathbb{R}^3$ denote the preset trajectory. Let $\bm{c}=[c_x,c_y,c_z]^T\in\mathbb{R}^3$ denote the design parameter vector of $\bm{\alpha}(t)$. To ensure the preset trajectory is monotonically decreasing and contained within the performance envelope, it is designed as $\dot{\alpha}_{\nu}(t)=-l_E\alpha_{\nu}(t)+b_{\nu}e^{-(l_E+c_{\nu})t},\alpha_{\nu}(0)=e_{E, \nu}(0), \nu\in\{ x,y,z\}$, where $c_{\nu}>0$ is determined by \eqref{eq_c_nu}, and $b_{\nu}=l_E e_{E,\nu}(0)+\dot{e}_{E,\nu}(0) \in\mathbb{R}$. The solution of $\alpha_{\nu}(t)$ is 
\begin{equation}
\begin{split}
    \alpha_{\nu}(t) = \frac{b_{\nu}}{c_{\nu}}(1-e^{-c_{\nu}t})e^{-l_Et} + e_{E, \nu}(0)e^{-l_Et}.
\end{split}
	\label{eq_alpha}
\end{equation}
From \eqref{eq_alpha}, we have $\alpha_{\nu}(0) = e_{E,\nu}(0)$ and $\alpha_{\nu}(\infty) = 0$. The preset trajectory is initialized with the initial tracking error and converges to zero at its terminal value. The first and second time derivatives of $\alpha_{\nu}(t)$ are 
\begin{equation}
\begin{split}
    \dot{\alpha}_{\nu}(t) = &\frac{b_{\nu}}{c_{\nu}}[-l_Ee^{-l_Et} + (c_{\nu}+l_E)e^{-(c_{\nu}+l_E)t}]\\
    &-e_{E, \nu}(0)l_Ee^{-l_Et},
\end{split}
\end{equation}
\begin{equation}
\begin{split}
    \ddot{\alpha}_{\nu}(t) = &\frac{b_{\nu}}{c_{\nu}}[l_E^2e^{-l_Et} -(c_{\nu}+l_E)^2e^{-(c_{\nu}+l_E)t}]\\
    & +  e_{E, \nu}(0)l_E^2e^{-l_Et}.
\end{split}
\end{equation}

\begin{lemma}[\cite{shi2024apreset}]\label{lemma_performance}
	Let $\varepsilon_{\nu}$ be a positive constant satisfying $\varepsilon_{\nu} <\min\{\rho_{\nu,\infty}, \rho_{\nu,0}-|e_{E,\nu}(0)|\}$, where $\nu\in\{ x,y,z\}$. For the preset trajectory of the tracking error calculated by \eqref{eq_alpha}, if $c_{\nu}>b_{\nu}/(\rho_{\nu,0}-\varepsilon_{\nu} -|e_{E, \nu}(0)|)$ such that $|\alpha_{\nu}(t)| < \rho_{\nu}(t)-\varepsilon_{\nu}, \forall t\geq0$.
\end{lemma}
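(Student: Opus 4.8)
The plan is to prove the bound channel by channel, so I fix an index $\nu\in\{x,y,z\}$ throughout; since nothing couples the three Cartesian components, establishing $|\alpha_\nu(t)|<\rho_\nu(t)-\varepsilon_\nu$ for a single $\nu$ settles the claim. The overall idea is to squeeze $|\alpha_\nu(t)|$ between two exponential envelopes: first dominate it by the single decaying exponential $(\rho_{\nu,0}-\varepsilon_\nu)e^{-l_Et}$, and then show that this exponential never exceeds $\rho_\nu(t)-\varepsilon_\nu$. Chaining the two estimates delivers the result for every $t\ge0$.

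First I would start from the closed form \eqref{eq_alpha} and factor out the common decay $e^{-l_Et}$, writing $\alpha_\nu(t)=\big[\tfrac{b_\nu}{c_\nu}(1-e^{-c_\nu t})+e_{E,\nu}(0)\big]e^{-l_Et}$. Applying the triangle inequality to the bracketed term and using $0\le 1-e^{-c_\nu t}\le 1$ for $t\ge0$ (which holds because $c_\nu>0$) gives the uniform bound $|\alpha_\nu(t)|\le\big(\tfrac{|b_\nu|}{c_\nu}+|e_{E,\nu}(0)|\big)e^{-l_Et}$. The hypothesis $c_\nu>b_\nu/(\rho_{\nu,0}-\varepsilon_\nu-|e_{E,\nu}(0)|)$ is exactly what converts this into the desired envelope: rearranging it, with the denominator positive by the standing assumption $\varepsilon_\nu<\rho_{\nu,0}-|e_{E,\nu}(0)|$, yields $\tfrac{b_\nu}{c_\nu}+|e_{E,\nu}(0)|<\rho_{\nu,0}-\varepsilon_\nu$, and hence $|\alpha_\nu(t)|<(\rho_{\nu,0}-\varepsilon_\nu)e^{-l_Et}$. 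The strict sign survives the multiplication because $e^{-l_Et}>0$.

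Next I would compare this envelope with the target by splitting $(\rho_{\nu,0}-\varepsilon_\nu)e^{-l_Et}=(\rho_{\nu,0}-\rho_{\nu,\infty})e^{-l_Et}+(\rho_{\nu,\infty}-\varepsilon_\nu)e^{-l_Et}$. Invoking the second standing assumption $\varepsilon_\nu<\rho_{\nu,\infty}$ (so that $\rho_{\nu,\infty}-\varepsilon_\nu>0$) together with $e^{-l_Et}\le1$ for $t\ge0$, the last summand is bounded above by $\rho_{\nu,\infty}-\varepsilon_\nu$. Therefore $(\rho_{\nu,0}-\varepsilon_\nu)e^{-l_Et}\le(\rho_{\nu,0}-\rho_{\nu,\infty})e^{-l_Et}+\rho_{\nu,\infty}-\varepsilon_\nu=\rho_\nu(t)-\varepsilon_\nu$, and combining this with the previous estimate gives $|\alpha_\nu(t)|<\rho_\nu(t)-\varepsilon_\nu$ for all $t\ge0$, as required.

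The step I expect to need the most care is the passage from the coefficient bound to the hypothesis on $c_\nu$, specifically the role of the sign of $b_\nu$. The triangle-inequality estimate naturally produces $|b_\nu|/c_\nu$, whereas the stated condition is phrased with $b_\nu$; these agree when $b_\nu\ge0$, but when $b_\nu<0$ the condition $c_\nu>b_\nu/(\rho_{\nu,0}-\varepsilon_\nu-|e_{E,\nu}(0)|)$ holds trivially yet no longer controls $|b_\nu|/c_\nu$. I would resolve this either by treating $b_\nu\ge0$ as the operative case for which the hypothesis is written, or by sharpening the estimate through the monotonicity of $g(t):=\tfrac{b_\nu}{c_\nu}(1-e^{-c_\nu t})+e_{E,\nu}(0)$: since $g$ is monotone in $t$, its range is the interval with endpoints $e_{E,\nu}(0)$ and $e_{E,\nu}(0)+b_\nu/c_\nu$, so $|g(t)|$ is bounded by the larger endpoint in modulus, and $|e_{E,\nu}(0)|<\rho_{\nu,0}-\varepsilon_\nu$ is already guaranteed by the assumption on $\varepsilon_\nu$. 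Apart from this sign bookkeeping, every inequality is an elementary monotonicity estimate, so the argument is short once the envelope chain is in place.
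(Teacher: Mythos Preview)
Your argument is correct and mirrors the paper's proof: both apply the triangle inequality to the bracketed factor in \eqref{eq_alpha}, bound $1-e^{-c_\nu t}\le 1$, and then use the two assumptions $\varepsilon_\nu<\rho_{\nu,0}-|e_{E,\nu}(0)|$ and $\varepsilon_\nu<\rho_{\nu,\infty}$ to conclude; the paper simply packages the same chain of estimates as a lower bound on $\gamma_\nu(t):=\rho_\nu(t)-\varepsilon_\nu-|\alpha_\nu(t)|$. Your concern about $b_\nu$ versus $|b_\nu|$ is well-founded and the paper handles it the same way you do---its proof silently replaces $b_\nu$ by $|b_\nu|$ in the condition on $c_\nu$, so the statement should really read $c_\nu>|b_\nu|/(\rho_{\nu,0}-\varepsilon_\nu-|e_{E,\nu}(0)|)$.
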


\begin{proof}
	Define $\gamma_{\nu}(t)=\rho_{\nu}(t)-\varepsilon_{\nu} -|\alpha_{\nu}(t)|$, we have 
	\begin{equation}
		\begin{split}
			\gamma_{\nu}(t) =& (\rho_{\nu,0}-\rho_{\nu,\infty})e^{-l_Et}+\rho_{\nu,\infty} - \varepsilon_{\nu} \\
			& -\left| e_{E, \nu}(0) e^{-l_Et}+\frac{b_{\nu}}{c_{\nu}}[1-e^{-c_{\nu}t}]e^{-l_Et} \right| \\
			\geq& \left[ \rho_{\nu,0} -\varepsilon_{\nu} - |e_{E,\nu}(0)|  - \frac{|b_{\nu}|}{c_{\nu}}(1-e^{-c_{\nu}t})\right] e^{-l_E t} \\
			& +\rho_{\nu,\infty}(1-e^{-l_E t}) - \varepsilon_{\nu} + \varepsilon_{\nu}e^{-l_E t}.\\
			=& \left[ \rho_{\nu,0} -\varepsilon_{\nu} - |e_{E, \nu}(0)|  - \frac{|b_{\nu}|}{c_{\nu}}(1-e^{-c_{\nu}t})\right] e^{-l_E t} \\
			& +(\rho_{\nu,\infty}- \varepsilon_{\nu} )(1-e^{-l_E t}).\\
		\end{split}
		\label{eq_y_E}
	\end{equation}
  	From \eqref{eq_y_E}, one can conclude that if $\rho_{\nu,0}- |e_{E, \nu}(0)| - |b_{\nu}|/c_{\nu}>0$ and $\rho_{\nu,\infty} > \varepsilon_{\nu} $, then $\gamma_{\nu}>0$. According to the definition of $\varepsilon_{\nu}$, $\rho_{\nu,\infty} > \varepsilon_{\nu} $ is naturally satisfied. Therefore, if we let $c_{\nu}>|b_{\nu}|/(\rho_{\nu,0}-\varepsilon_{\nu}-|e_{E, \nu}(0)|)$, then we have $\gamma_{\nu}>0$. From the definition of $\gamma_{\nu}(t)$, we have $|\alpha_{\nu}(t)| < \rho_{\nu}(t)-\varepsilon_{\nu}, \forall t\geq0$. This completes the proof.
\end{proof}

Lemma~\ref{lemma_performance} gives a method to determine the parameter vector $\bm{c}$. Through the regulation of $\bm{c}$, the descent rate of the preset trajectory is made faster than that of the performance envelope boundary function, thereby guaranteeing the tracking error contained within the performance envelope.

\subsection{Tracking Control Design}\label{sec_tracking_controller}

The goal of tracking control is to compute the desired end-effector velocity command $\dot{\bm{p}}_{E,d}$ such that the end-effector position $\bm{p}_E$ tracks the target position $\bm{p}_O$. In order to ensure that the tracking error satisfies the requirements of the performance envelope, we introduce the preset trajectory for the tracking error into the tracking control design (see Fig.~\ref{fig_structureController}b). This ensures that the end-effector of the manipulator can track $\bm{p}_O$ within the preset time.

Let $\bm{z}=\bm{e}_E - \bm{\alpha}(t)$. The sliding mode vector is defined as
\begin{equation}
	\bm{s}=\bm{z}+\bm{\Lambda}\int_{0}^{t}\bm{z}dt,
	\label{eq_slide_mode}
\end{equation}
where $\bm{\Lambda}$ is a positive diagonal matrix. Then, we have 
\begin{equation}
		\dot{\bm{s}}=\dot{\bm{z}}+\bm{\Lambda}\bm{z}=\dot{\bm{p}}_E -\dot{\bm{p}}_O-\dot{\bm{\alpha}}+\bm{\Lambda}\bm{z}.
 \label{eq_s_dot}
\end{equation}
The relationship between the actual velocity and the desired velocity input given to the controller can be described as
\begin{equation}
    \dot{\bm{p}}_E = \dot{\bm{p}}_{E,d} + \bm{\Delta}, \label{eq_pe_dot_model}
\end{equation}
where $\bm{\Delta}$ represents an unknown item. Equation \eqref{eq_pe_dot_model} reflects the capability of reference allocation of the aerial manipulator, since the objective of reference allocation is to ensure that $\dot{\bm{p}}_E$ accurately tracks $\dot{\bm{p}}_{E,d}$. Substituting \eqref{eq_pe_dot_model} into \eqref{eq_s_dot} yields 
\begin{equation}
    \dot{\bm{s}} = \dot{\bm{p}}_{E,d} + \bm{\Delta} -\dot{\bm{p}}_O-\dot{\bm{\alpha}}+\bm{\Lambda}\bm{z}.
\end{equation}
The tracking control law is designed as 
\begin{equation}
	\dot{\bm{p}}_{E,d} =  \dot{\bm{p}}_O + \dot{\bm{\alpha}} -\bm{\Lambda}\bm{z} -\bm{K}\bm{s},
	\label{eq_control_preset}
\end{equation}
where $\bm{K}$ is the gain matrix, and it is a positive diagonal matrix.

\begin{theorem}\label{theorem_posi}
	Assuming that $\bm{\Delta}$ is bounded, i.e., $\|\bm{\Delta}\|\leq \delta_E$, if the tracking control law is designed as \eqref{eq_control_preset}, then the position error $\bm{e}_E$ is bounded and $|e_{E, \nu}|< \rho_{\nu}(t), \forall t \geq 0$.
\end{theorem}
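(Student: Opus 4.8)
The plan is to collapse the closed loop to a simple sliding-variable equation, prove boundedness of $\bm{s}$ by a quadratic Lyapunov argument, propagate that boundedness back to $\bm{z}=\bm{e}_E-\bm{\alpha}$ through the (diagonal, hence decoupled) sliding surface \eqref{eq_slide_mode}, and finally combine the resulting bound on $\bm{z}$ with Lemma~\ref{lemma_performance} to obtain the strict envelope containment.

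First I would substitute the control law \eqref{eq_control_preset} into the expression $\dot{\bm{s}} = \dot{\bm{p}}_{E,d} + \bm{\Delta} -\dot{\bm{p}}_O-\dot{\bm{\alpha}}+\bm{\Lambda}\bm{z}$. The terms $\dot{\bm{p}}_O$, $\dot{\bm{\alpha}}$, and $\bm{\Lambda}\bm{z}$ cancel, leaving the clean closed loop $\dot{\bm{s}} = -\bm{K}\bm{s} + \bm{\Delta}$. Taking $V=\tfrac{1}{2}\bm{s}^T\bm{s}$ gives $\dot{V} = -\bm{s}^T\bm{K}\bm{s} + \bm{s}^T\bm{\Delta} \le -\lambda_{\min}(\bm{K})\|\bm{s}\|^2 + \delta_E\|\bm{s}\|$, so $\dot{V}<0$ whenever $\|\bm{s}\|>\delta_E/\lambda_{\min}(\bm{K})$. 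Because $\bm{\alpha}(0)=\bm{e}_E(0)$ (the initialization of the preset trajectory) forces $\bm{z}(0)=\bm{0}$, and the integral term in \eqref{eq_slide_mode} vanishes at $t=0$, we have $\bm{s}(0)=\bm{0}$; hence the ball of radius $\delta_E/\lambda_{\min}(\bm{K})$ is forward invariant and $\|\bm{s}(t)\|\le \delta_E/\lambda_{\min}(\bm{K})$ for all $t\ge 0$.

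Next I would transfer this bound to $\bm{z}$. Since $\bm{\Lambda}$ is diagonal the surface \eqref{eq_slide_mode} decouples componentwise: writing $w_\nu=\int_0^t z_\nu\,dt$ gives $\dot{w}_\nu = z_\nu = s_\nu-\Lambda_\nu w_\nu$, i.e. $\dot{w}_\nu=-\Lambda_\nu w_\nu + s_\nu$, a stable first-order filter driven by the bounded input $s_\nu$. Thus $w_\nu$ is bounded, and therefore so is $z_\nu=s_\nu-\Lambda_\nu w_\nu$; since $\bm{\alpha}(t)$ decays to $\bm{0}$ and is therefore bounded, $\bm{e}_E=\bm{z}+\bm{\alpha}$ is bounded. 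For the envelope claim I would establish the componentwise ultimate bound $|z_\nu|\le\varepsilon_\nu$ and then write $|e_{E,\nu}| = |z_\nu+\alpha_\nu| \le |z_\nu|+|\alpha_\nu| < \varepsilon_\nu + (\rho_\nu(t)-\varepsilon_\nu) = \rho_\nu(t)$, where the strict inequality on $|\alpha_\nu|$ is precisely Lemma~\ref{lemma_performance}.

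The main obstacle is this last step: guaranteeing that the disturbance-induced ultimate bound on each $z_\nu$ actually stays within the design margin $\varepsilon_\nu$. Propagating $\|\bm{s}\|\le\delta_E/\lambda_{\min}(\bm{K})$ through the integral filter yields a bound of the form $|z_\nu|\lesssim \delta_E/\lambda_{\min}(\bm{K})$, so the margin condition becomes a gain condition that ties $\bm{K}$, $\bm{\Lambda}$, $\delta_E$ and $\varepsilon_\nu$ together: a sufficiently large $\bm{K}$ relative to $\delta_E$ is what renders it feasible. I would also treat the boundary case $t=0$ separately, where $\bm{z}(0)=\bm{0}$ gives $|e_{E,\nu}(0)|=|\alpha_\nu(0)|<\rho_{\nu,0}$ directly, so the strict inequality holds from the outset rather than only asymptotically.
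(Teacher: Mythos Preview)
Your proposal is correct and follows essentially the same route as the paper: collapse the closed loop to $\dot{\bm{s}}=-\bm{K}\bm{s}+\bm{\Delta}$, use a quadratic Lyapunov function together with $\bm{s}(0)=\bm{0}$ to obtain $\|\bm{s}(t)\|\le\delta_E/\lambda_{\min}(\bm{K})$, push that bound through the first-order filter $\dot{\bm z}_{\int}=-\bm{\Lambda}\bm z_{\int}+\bm{s}$ to bound $\bm{z}$, then invoke Lemma~\ref{lemma_performance} with $\varepsilon_\nu$ taken equal to the resulting $\bm{z}$-bound and a correspondingly large $\bm{K}$. The only cosmetic differences are that the paper bounds $\|\bm{s}\|$ via a comparison argument on $W=\sqrt{V}$ rather than forward invariance, and works with vector norms throughout rather than decoupling componentwise; your componentwise filter argument in fact yields the slightly cleaner bound $|z_\nu|\le 2\delta_E/\lambda_{\min}(\bm{K})$ without the $\lambda_{\max}(\bm{\Lambda})/\lambda_{\min}(\bm{\Lambda})$ factor that appears in the paper's $\delta_{\bm z}$.
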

\begin{proof}
	A candidate Lyapunov function is given as $V_{E}(\bm{s})=\frac{1}{2}\bm{s}^T\bm{s}$. The time derivative of $V_E$ is given as 
	\begin{equation}
		\begin{split}
			\dot{V}_E=&\bm{s}^T(\dot{\bm{p}}_E -\dot{\bm{p}}_O-\dot{\bm{\alpha}}+\bm{\Lambda}\bm{z}).\\
			=&\bm{s}^T(\dot{\bm{p}}_{E,d}-\bm{\Delta} -\dot{\bm{p}}_O-\dot{\bm{\alpha}}+\bm{\Lambda}\bm{z}).\\
		\end{split}
		\label{eq_v_dot1} 
	\end{equation} 
	Substituting \eqref{eq_control_preset} into \eqref{eq_v_dot1} yields $\dot{V}_E= -\bm{s}^T\bm{K}\bm{s} - \bm{s}^T{\bm{\Delta}}$. Using the comparison theorem \cite[Section~9.3]{khalil2002nonlinear}, we define $W=\sqrt{V_E}=\left\|\bm{s}\right\|/\sqrt{2}$. Then, we have 
	\begin{equation}
		\begin{split}
			\dot{W}&=\frac{-\bm{s}^T\bm{K}\bm{s}-\bm{s}^T{\bm{\Delta}}}{\sqrt{2}\|\bm{s}\| }\\
			&\leq \frac{-\lambda_{\min}(\bm{K})\|\bm{s}\|^2+\delta_E\left\|\bm{s}\right\| }{\sqrt{2}\left\|\bm{s}\right\| }\\
			&=-\lambda_{\min}(\bm{K})W+{\delta_E}/{\sqrt{2}},
		\end{split}
	\end{equation}
	where $\lambda_{\min}(\bm{K})$ is the minimum eigenvalue of $\bm{K}$. Since $\bm{K}_v$ is a positive diagonal matrix, we have $\lambda_{\min}(\bm{K})>0$. Then, we have 
\begin{equation}
     W(t) \leq W(0)e^{-\lambda_{\min}(\bm{K})t} + \delta_E/(\sqrt{2}\lambda_{\min}(\bm{K})).
\end{equation}
Since $W(0) = 0$, one can obtain 
\begin{equation}
    W(t)\leq \delta_E/(\sqrt{2}\lambda_{\min}(\bm{K})).
\end{equation}
	According to the definition of $W$, we have $\|\bm{s}\|\leq \delta_E/\lambda_{\min}(\bm{K})$. 
	
	Let $\bm{z}_{\int}= \int_{0}^{t}\bm{z}dt$ denote the integral of $\bm{z}$ with respect to time. According to \eqref{eq_slide_mode}, we have 
 \begin{equation}
    \dot{\bm{z}}_{\int} = -\bm{\Lambda}\bm{z}_{\int} + \bm{s}.
    \label{eq_zint}
\end{equation}
 Combing \eqref{eq_zint} and $\bm{z}_{\int}(0) = 0$ yields
	\begin{equation}
		\begin{split}
			\bm{z}_{\int}=&\bm{z}_{\int}(0)e^{-\bm{\Lambda}t} + \int_{0}^{t}e^{-\bm{\Lambda}(t-\tau)}\bm{s}(\tau)d\tau\\
			=& \int_{0}^{t}e^{-\bm{\Lambda}(t-\tau)}\bm{s}(\tau)d\tau
		\end{split}
	\end{equation}
	Therefore, we have 
	\begin{equation}
		\begin{split}
			\|\bm{z}_{\int}\|\leq &\frac{\delta_E}{\lambda_{\min}(\bm{K})} \int_{0}^{t}e^{-\lambda_{\min}(\bm{\Lambda})(t-\tau)}d\tau\\
			=& {\delta_E}(1-e^{-\lambda_{\min}(\bm{\Lambda})t})/({\lambda_{\min}(\bm{K})\lambda_{\min}(\bm{\Lambda})})\\
			\leq & {\delta_E}/({\lambda_{\min}(\bm{K})\lambda_{\min}(\bm{\Lambda})}).
		\end{split}
        \label{eq_z_inf}
	\end{equation}
	According to the definition of $\bm{s}$, one can obtain that   
	\begin{equation}
 \begin{split}
     \| \bm{z}\|-\|\bm{\Lambda}\bm{z}_{\int} \| & \leq | 	\| \bm{z}\|-\|\bm{\Lambda}\bm{z}_{\int}  \|| \leq \|\bm{z}+\bm{\Lambda}\bm{z}_{\int} \|\\
     &=\|\bm{s}\|\leq \delta_E/\lambda_{\min}(\bm{K}).
 \end{split}
 \label{eq_z_upper}
	\end{equation}
Let $\delta_{\bm{z}}$ denote the upper bound of $\|\bm{z}\|$. From \eqref{eq_z_inf} and \eqref{eq_z_upper}, one can conclude that 
 \begin{equation}
 \begin{split}
     \|\bm{z}\|&\leq \|\bm{\Lambda}\bm{z}_{\int}\| +\delta_E/\lambda_{\min}(\bm{K}) \\
     &\leq \frac{(\lambda_{\min}(\bm{\Lambda})+\lambda_{\max}(\bm{\Lambda}))\delta_E}{\lambda_{\min}(\bm{\Lambda})\lambda_{\min}(\bm{K})} = \delta_{\bm{z}}.
 \end{split}
 \end{equation}
 
	Further, we prove that $|e_{E,\nu}|\leq\rho_{\nu}(t), \nu\in\{x,y,z\}$ by two steps. First, the parameter $c_{\nu}$ is determined according to Lemma~\ref{lemma_performance}. The gain matrix $\bm{K}$ is chosen such that $\delta_{\bm{z}}< \min\{\rho_{\nu,\infty}, \rho_{\nu,0}-|\e_{E, \nu}(0)| \}$. The parameters $c_{\nu}$ is chosen as 
	\begin{equation}
		c_{\nu}>b_{\nu}/(\rho_{\nu,0}- \delta_{\bm{z}} - |e_{E, \nu}(0)|), \nu\in\{x,y,z\}.
		\label{eq_c_nu}
	\end{equation}
	From Lemma~\ref{lemma_performance}, we have 
	\begin{equation}
		|\alpha_{\nu}(t)| <  \rho_{\nu}(t) -  \delta_{\bm{z}}, \forall t\geq0 \text{ and } \nu\in\{x,y,z\}.
		\label{eq_alpha_rho}
	\end{equation}
 
Second, the upper bound of $|e_{E, \nu}|$ is obtained from its definition and the upper bound of $|\alpha_{\nu}(t)|$.
 Since $|e_{E, \nu}|=|e_{E, \nu}-\alpha_{\nu} + \alpha_{\nu} | \leq \|\bm{z}\| + |\alpha_{\nu}|, \forall t\geq 0$, we have  
	\begin{equation}
		|e_{E, \nu}| < \delta_{\bm{z}}  + \alpha_{\nu}, \forall t \geq 0.
		\label{eq_pe_alpha}
	\end{equation}
	By comparing \eqref{eq_alpha_rho} and \eqref{eq_pe_alpha}, one can conclude that $|e_{E, \nu}|< \rho_{\nu}(t), \forall t \geq 0$. This completes the proof. 
\end{proof}

\section{Reference Allocation}
This section proposes a reference allocation method to coordinate the quadcopter base and the Delta arm to track $\dot{\bm{p}}_{E,d}$ derived from \eqref{eq_control_preset}. In addition, physical constraints are considered in the proposed method to avoid solutions that may violate physical constraints, rendering them impractical for the aerial manipulator. The considered physical constraints of the aerial manipulator include position, velocity, and acceleration constraints.

Let $\bm{s}= [\bm{p}_B^T,\bm{p}_E^{D,T}]^T$ denote the state of the aerial manipulator, which consists of the position of the quadcopter base in $\varSigma_I$ and the position of the end-effector in $\varSigma_D$. Then, the time derivative of $\bm{p}_E$ is $\dot{\bm{p}}_E = \bm{J}\dot{\bm{s}} -[ \bm{R}_B\bm{p}_E^B]_{\times}\bm{\omega}$, where $\bm{J}=[\bm{I}_3,\bm{R}_B\bm{R}_D^B]$ is the Jacobian matrix, $\bm{I}_3$ is the $3\times 3$ identity matrix. 

To calculate the references of the quadcopter base and the Delta arm with the given desired end-effector position, the reference allocation problem is mathematically formulated as a QP problem:
\begin{equation}
	\begin{split}
		\min_{\dot{\bm{s}}} \quad &F(\dot{\bm{s}})=(\dot{\bm{p}}_E -\dot{\bm{p}}_{E,d})^T(\dot{\bm{p}}_E -\dot{\bm{p}}_{E,d}) + \dot{\bm{s}}^T\bm{W}\dot{\bm{s}}\\
		& \quad =\dot{\bm{s}}^T(\bm{J}^T\bm{J} + \bm{W})\dot{\bm{s}}\\
            & \quad \quad -2(\dot{\bm{p}}_{E,d}+[ \bm{R}_B\bm{p}_E^B]_{\times}\bm{\omega})^T\bm{J}\dot{\bm{s}}\\
		\text{s.t.}\quad 	& \dot{\bm{s}}_{\underline{p}} \leq \dot{\bm{s}} \leq \dot{\bm{s}}_{\overline{p}}, \dot{\bm{s}}_{\min} \leq \dot{\bm{s}} \leq \dot{\bm{s}}_{\max}, \dot{\bm{s}}_{\overline{v}} \leq \dot{\bm{s}} \leq \dot{\bm{s}}_{\overline{v}},
	\end{split}
	\label{eq_QP_problem}
\end{equation}
where $F(\dot{\bm{s}})$ is the cost function of the QP problem, $\bm{W}$ is a positive diagonal matrix,  $\dot{\bm{p}}_{E,d}$ is calculated by \eqref{eq_control_preset}, $\dot{\bm{s}}_{\underline{p}}$ and $\dot{\bm{s}}_{\overline{p}}$ are lower and upper approximate bounds of the actuated state position. $\dot{\bm{s}}_{\min}$ and $\dot{\bm{s}}_{\max}$ are the upper and lower bounds of $\dot{\bm{s}}$. $\dot{\bm{s}}_{\underline{v}}$ and $\dot{\bm{s}}_{\overline{v}}$ are lower and upper approximate bounds of the actuated state acceleration. 

In Equation \eqref{eq_QP_problem}, if position and acceleration constraints were directly described using terms of $\bm{s}$ and $\ddot{\bm{s}}$, the QP formulation would require 18 optimization variables. To reduce computational complexity, we approximate both position and acceleration constraints as expressions represented by velocity $\dot{\bm{s}}$, thereby decreasing the number of variables to 6. The analytical expressions for $\dot{\bm{s}}_{\underline{p}}$, $\dot{\bm{s}}_{\overline{p}}$, $\dot{\bm{s}}_{\underline{v}}$, and $\dot{\bm{s}}_{\overline{v}}$ can be derived from Section~VI-A of \cite{cao2023eso}. Let $\dot{\bm{s}}^*\in\mathbb{R}^{6}$ denote the optimal solution of QP problem \eqref{eq_QP_problem}. The constraints in \eqref{eq_QP_problem} ensure the optimal solution $\dot{\bm{s}}^*$ to satisfy the physical constraints of the aerial manipulator. The desired state vector, $\bm{s}_d=[\bm{p}_{B,d}^{T},\bm{p}_{E,d}^{D,T}]^T$, is obtained by integrating the optimal velocity solution, $\dot{\bm{s}}_d = \dot{\bm{s}}^*$. Consequently, the desired positions $\bm{p}_{B,d}$ and $\bm{p}_{E,d}^{D}$ can be derived from $\bm{s}_d$. The desired joint configuration $\bm{q}_d$ is then calculated using the inverse kinematics of the Delta arm. 

The QP problem is solved by an efficient solver, OSQP \cite{osqp}. To accelerate the QP problem solution, the result from the previous iteration is used as the initial value for the next iteration. Similar to CLIK, the proposed method also enables redundancy management in prioritized multi-task scenarios. This is achieved through configurable approaches in the QP formulation, either by adding constraint conditions for hard priority enforcement or incorporating additional terms in the cost function for soft priority adjustment \cite{lunni2017nonlinear}.

\begin{figure*}[t]
	\centering
	\includegraphics[width=1.0\linewidth]{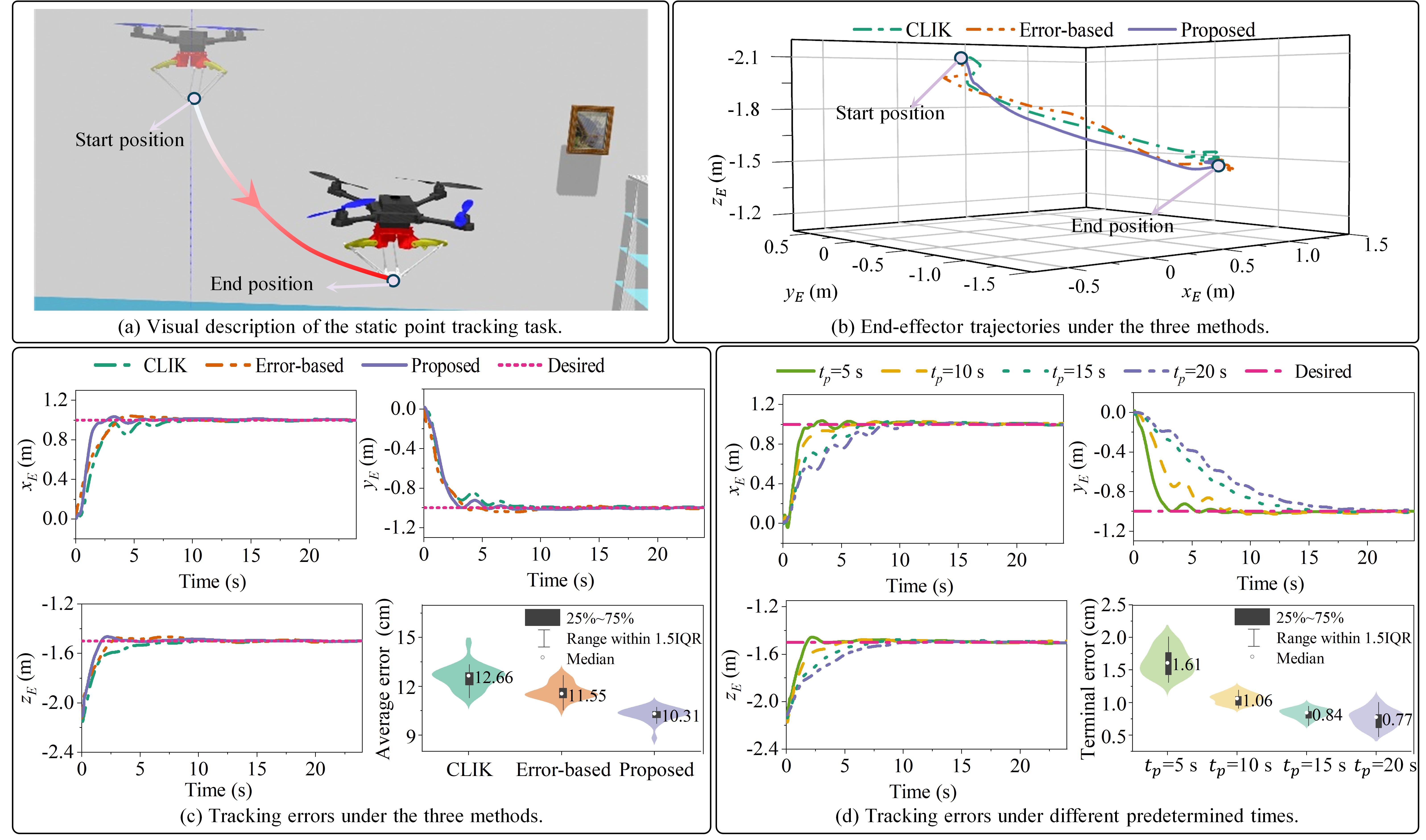}
	\caption{Experiment results of the static point tracking example.}
	\label{fig_comparison_staic_point_tracking}
\end{figure*}

\begin{figure*}[t]
	\centering
	\includegraphics[width=1.0\linewidth]{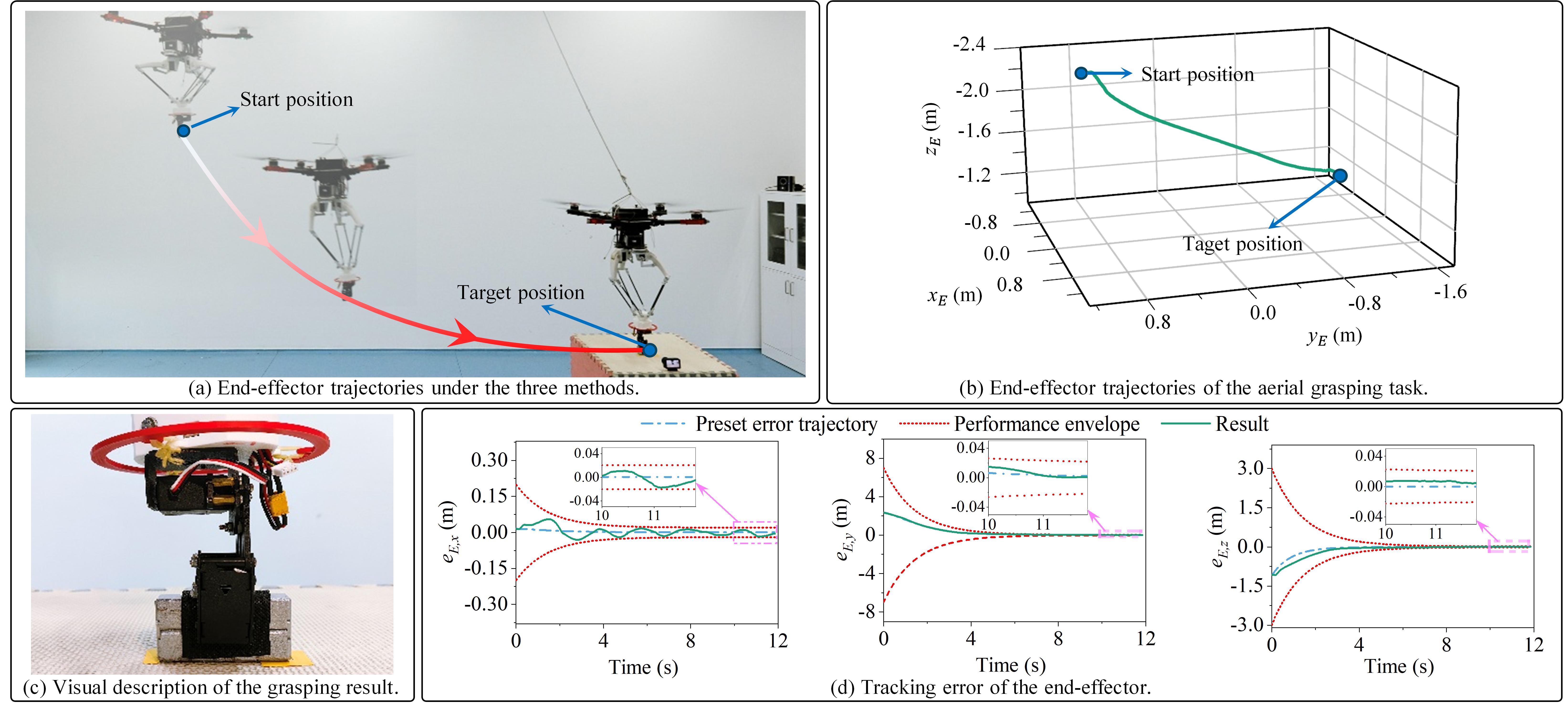}
	\caption{Experiment results of the aerial grasping example.}
	\label{fig_grasping_result}
\end{figure*}

\begin{figure*}[t]
	\centering
	\includegraphics[width=1.0\linewidth]{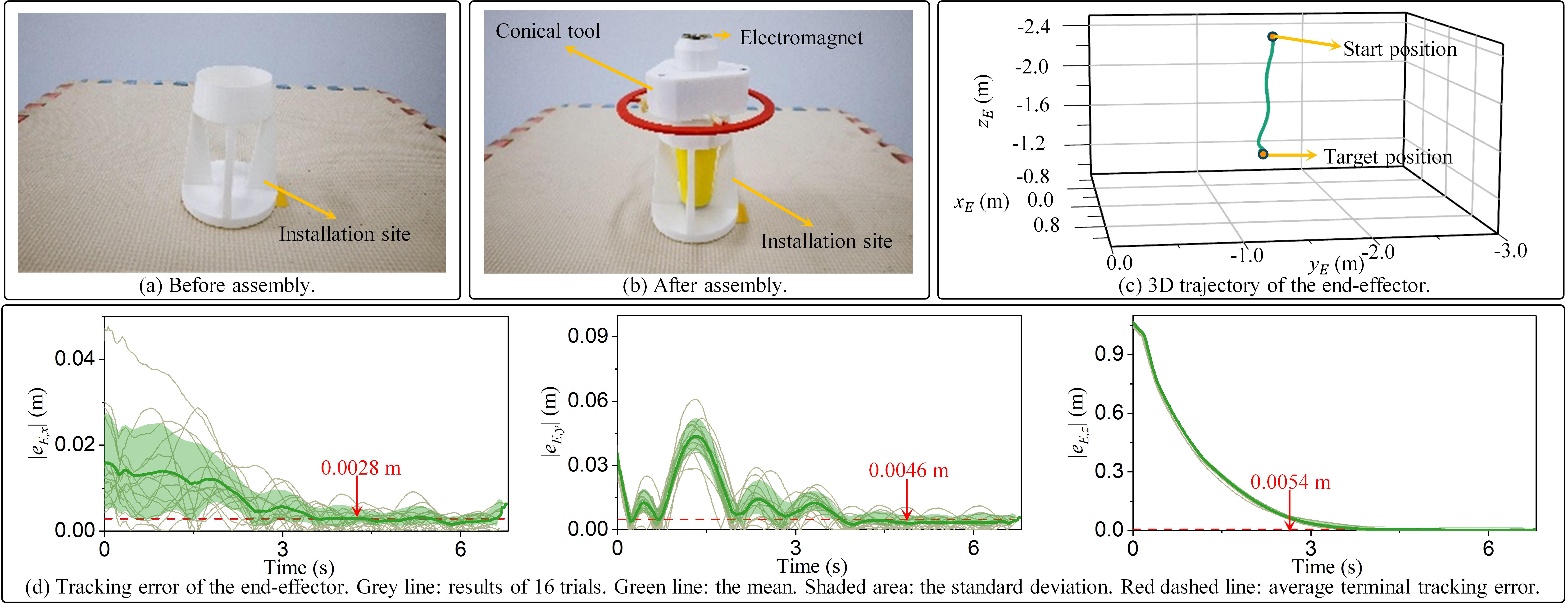}
	\caption{Experiment results of the peg-in-hole assembly example.}
	\label{fig_peg_in_hole_result}
\end{figure*}

\section{Experimental Verification}\label{sec_6}

This section presents experimental results to verify the effectiveness of the proposed algorithms. First of all, we describe the experimental setup. The motor-to-motor distance of the quadcopter base is $0.65$~m. The mass of the quadcopter (including a battery) is $3.60$~kg. The Delta arm consists of a mounting base ($0.56$~kg) and a movable robotic arm ($0.44$~kg). The proposed tracking controller and the inverse kinematics of the Delta arm run on an onboard Intel NUC i7 computer with ROS. An ESO-based nonlinear controller is used as the flight controller for the quadcopter, running on a Pixhawk 4 autopilot \cite{cao2023eso}. The controller inputs include the desired position $\bm{p}_{B,d}$ and the desired yaw angle $\psi_{d}$. Real-world experiments utilized a Vicon motion capture system to measure the quadcopter base and end-effector positions with high accuracy (see Fig.~\ref{fig_structureController}c). 

In all the experiments, we use the same set of control gains: $\bm{\Lambda}=\text{diag}([0.2, 0.2, 0.2])$, $\bm{K}=\text{diag}([ 1.2, 1.2, 1.2])$, $\delta_E = 0.01$. Parameter $\bm{\rho}_{0}$ is configured according to the initial tracking error of each experiment. Parameter $\bm{\rho}_{\infty}$ is set as [0.02, 0.02, 0.02] for all experiments in this paper.

\subsection{Example 1: Static Point Tracking}
The goal of this experiment is to rapidly maneuver the end-effector from a start position $[0,0,-2.1]$~m to an end position $[1,-1,-1.5]$~m. This task demonstrates the accuracy and responsiveness of the proposed control algorithm. We simulate this task because many real-world applications (e.g., aerial grasping and peg-in-hole assembly) can be decomposed into point-to-point motion problems. Thus, achieving high-precision static point tracking is critical for aerial manipulators. The simulation was performed in Gazebo 11 (Fig.~\ref{fig_comparison_staic_point_tracking}(a)), considering sensor noises (GPS, gyroscope, and accelerometer). All sensor noises are modeled as zero-mean Gaussian processes with standard deviations of 0.01 cm for GPS noise, 0.02 rad/s$^2$ for gyroscope noise, and 0.04 m/s$^2$ for accelerometer noise.

To demonstrate the advantages of the proposed algorithm, we compare the proposed method with two methods: the CLIK method and the error-based method. The CLIK method is formed by combining the CLIK feedback strategy from \cite{baizid2017behavioral} with our proposed QP-based reference allocation. The error-based method directly utilizes the end-effector tracking error in conjunction with MPC. In the proposed method, parameters $\bm{\rho}_{0}$ and $\bm{c}$ are set as [2.0, 2.0, 1.2] and [0.25, 0.25, 0.25], respectively. Fig.~\ref{fig_comparison_staic_point_tracking}(b) and (c) present the comparative results. Twenty-one simulations per method were conducted to evaluate the tracking performance of the three methods. Fig.~\ref{fig_comparison_staic_point_tracking}(c) presents the median values of average tracking errors for each method. The proposed method demonstrated superior performance with a median error of 10.31 cm (standard deviation: 0.45~cm), representing 22.8\% and 12.0\% reductions compared to the CLIK method (12.66 cm, standard deviation: 0.75~cm) and error-based method (11.55 cm, standard deviation: 0.52~cm), respectively. The reason for the relatively large average tracking error is that the initial point is far from the target position. Furthermore, it demonstrated faster convergence with a median time of 2.4s, outperforming both the CLIK (4.8s) and error-based (3.2s) methods.

To further investigate the influence of the preset time $t_p$ on the results, we perform simulations with different preset time values: 5s, 10s, 15s, and 20s. The simulation results are shown in Fig.~\ref{fig_comparison_staic_point_tracking}(d). It can be observed that increasing the preset time reduces the tracking error in the terminal phase. To conduct a more detailed analysis, we performed 21 simulations for each of these four values. We define the terminal tracking error as the average error within five seconds after $||\bm{e}_E||\leq ||\bm{\rho}_{\infty}||$. The statistical results show that by increasing the preset time from 5 to 20s, the terminal tracking error decreases from 1.61 cm to 0.77 cm. 

\subsection{Example 2: Aerial Grasping}
The goal of this experiment is to grasp a target object on a pillar. In the experiment, $t_p$, $\bm{\rho}_{0}$, and $\bm{c}$ are set as 10s, [0.1, 5.0, 3.0], and [0.15, 0.15, 0.15], respectively. The start position of the end-effector is set as [0.01, 0.99, -2.42]~m. The position of the target object is set as [0, -1.36, -1.34]~m. To achieve target object grasping, a rigid two-finger gripper is mounted on the end effector (see Fig.~\ref{fig_structureController}d). The visual description of the experiment is shown in Fig~\ref{fig_grasping_result}(a). 

The experiment demonstrates that our proposed method is capable of achieving aerial grasping. Fig~\ref{fig_grasping_result}(b-d) illustrates the results of the experiment. The 3D trajectory of the end effector is shown in Fig~\ref{fig_grasping_result}(b). Fig~\ref{fig_grasping_result}(c) gives the visual description of the grasping result. The tracking errors of the end-effector are given in Fig~\ref{fig_grasping_result}(d). The results show that the tracking errors in all three directions are contained within the performance envelope. The terminal tracking error of the proposed method in this experiment is 1.26 cm (standard deviation: 0.81 cm). The total task completion time was 11.8s.

\subsection{Example 3: Peg-in-hole Assembly}
The goal of this experiment is to perform a peg-in-hole assembly of a conical tool into an installation site. An electromagnet is mounted directly above the tool, enabling controllable tool attachment and detachment (see Fig.~\ref{fig_structureController}d). In the experiment, $t_p$, $\bm{\rho}_{0}$, and $\bm{c}$ are set as 5s, [0.1, 0.1, 2.0], and [0.3, 0.3, 0.3], respectively. The start position of the end effector is [0.04, -1.47, -2.40]~m, and the position of the installation site is [0.04, -1.37, -1.34]~m. The visual description of the experiment is shown in Fig.~\ref{fig_pig_in_hole}. 

The experiment demonstrates that our proposed method is capable of achieving aerial peg-in-hole assembly. Before assembly, the installation site is mounted on a pillar (Fig.~\ref{fig_peg_in_hole_result}(a)). After assembly, the tool is inserted into the installation site (Fig.~\ref{fig_peg_in_hole_result}(b)). The tracking results of the end-effector are shown in Fig.~\ref{fig_peg_in_hole_result}(c) and (d). The 16 experimental trials exhibited durations ranging from 6.2 to 6.8s, with a mean completion time of 6.5s. The average terminal tracking errors of the proposed method in the three directions are 0.28 cm, 0.46 cm, and 0.54 cm, respectively. The overall average terminal tracking error in three directions was 0.89~cm, and the standard deviation was 0.64~cm.

\section{Conclusion}\label{sec_7}
This paper proposes a novel kinematic tracking control method composed of preset trajectory tracking control and QP-based reference allocation. Its performance was validated through three experiments. In the static point tracking experiment, comparative results demonstrate that the proposed method achieves a 22.8\% reduction in average tracking error relative to the CLIK method and a 12.0\% reduction compared to the error-based method. Furthermore, it is observed that adjusting the preset time $t_p$ can effectively reduce the terminal tracking error of the end-effector. The experimental results from both aerial grasping and peg-in-hole assembly tasks demonstrate that the proposed algorithm is capable of being effectively applied to real-world aerial manipulation scenarios. Future research can focus on the following directions: (1) employing disturbance estimation methods (e.g., offline-trained neural networks \cite{zhao2025RLBook}) to estimate $\bm{\Delta}$ for improving the system's robustness against dynamic coupling and wind disturbances, and (2) implementing redundancy management strategies to enhance task completion capability.

\bibliography{myOwnPub} 
\bibliographystyle{ieeetr}
	

\end{document}